\newtheorem{theorem}{Theorem}
\newtheorem{corollary}{Corollary}
\title{Enhancing the Expressivity of Temporal Graph Networks through Source-Target Identification}
\author{%
  Benedict Aaron Tjandra$^{1}$\thanks{Correspondence to aaron\_tjandra@yahoo.com.}
  \phantom{1}
  Federico Barbero$^{1}$
  \phantom{1}
  Michael Bronstein$^{1}$
\\[.8em]
$^1$ Department of Computer Science, University of Oxford\\
\phantom{12}
\vspace{-.9cm}
}
\begin{document}

\maketitle

\begin{abstract}
    Despite the successful application of Temporal Graph Networks (TGNs) for tasks such as dynamic node classification and link prediction, they still perform poorly on the task of dynamic node affinity prediction -- where the goal is to predict `how much' two nodes will interact in the future. In fact, simple heuristic approaches such as persistent forecasts and moving averages over \emph{ground-truth labels} significantly and consistently outperform TGNs. Building on this observation, we find that computing heuristics \textit{over messages} is an equally competitive approach, outperforming TGN and all current temporal graph (TG) models on dynamic node affinity prediction. In this paper, we prove that no formulation of TGN can represent persistent forecasting or moving averages over messages, and propose to enhance the expressivity of TGNs by adding source-target identification to each interaction event message. We show that this modification is required to represent persistent forecasting, moving averages, and the broader class of autoregressive models over messages. Our proposed method, TGNv2, significantly outperforms TGN and all current TG models on all Temporal Graph Benchmark (TGB) dynamic node affinity prediction datasets. 
\end{abstract}

\section{Introduction}
Temporal Graph (TG) models have become increasingly popular in recent years \citep{dynamic-graphs-survey-1, dynamic-graphs-survey-2, dynamic-graphs-survey-3} due to their suitability for modeling a range of real-world systems as dynamic graphs that evolve through time, e.g. social networks, traffic networks, and physical systems \citep{social-networks-1, social-networks-2, traffic-1, traffic-2, physics-1, physics-2}. Unlike static graphs, dynamic graphs allow the addition of nodes and edges, and graph features to change over time. Despite the successes of current TG models for dynamic node classification and link prediction, they have been shown to struggle in \textit{dynamic node affinity prediction}, being significantly outperformed by simple heuristics such as persistent forecasting and moving average over ground-truth labels \citep{tgb-paper, an-empirical-evaluation-of-tgb}.

In dynamic node affinity prediction, the task is to predict a node's future `affinity' for other nodes given the temporal evolution of the graph. Informally, the affinity of a node $x$ towards a node $y$ over some time interval [$t, t+\delta$] refers to how much $x$ has interacted with $y$ over that interval. For example, if node $A$ sends 10 identical messages to node $B$ and 100 of the same messages to node $C$ over some time interval [$t, t+\delta$], then $A$ has a higher affinity for $C$ over that interval. This formulation is useful in settings such as recommender systems, e.g. predicting a user's future song preferences given their past listening history \citep{tgb-paper}. A concrete example from the Temporal Graph Benchmark (TGB) \citep{tgb-paper} is \texttt{tgbn-trade}, where nodes represent nations and edges represent the amount of goods exchanged in a single trade. In this case, the goal of dynamic node affinity prediction is to predict the amount of trade one nation would have with another nation in the next year, given the past evolution of global trading patterns. 

\paragraph{Contributions.}
This work is based on the assumption that considering past messages between two nodes is important to predict their affinity at a future time. We start by empirically validating this assumption by demonstrating that a moving average computed over a node's past messages to another node is a powerful heuristic, beating all existing TG models. Armed with this result, we ask whether Temporal Graph Networks (TGNs) \citep{tgn-paper}, a popular TG model, can represent moving averages over messages. Surprisingly, we find that no formulation of TGN can represent moving averages of any order $k$. This result implies that TGNs are unable to represent persistent forecasting (i.e. the simple heuristic of outputting the most recent message between a pair of nodes), indicating a substantial weakness in its design. To remedy this, we propose to modify TGN by adding source-target identification to each interaction event message. We prove that our method, TGNv2, is strictly more expressive than TGN as it is able to represent persistent forecasting, moving averages, and autoregressive models. Further, we show that TGNv2 significantly outperforms all current TG models on all TGB datasets on dynamic node affinity prediction. 

\section{The Hidden Limitation of Temporal Graph Networks}
This work is motivated by our observation that computing moving averages over past messages, despite still lagging behind moving average over ground-truth labels, is a competitive heuristic that outperforms all current TG models on every node affinity prediction dataset (\Cref{main-results}). Given an order $k \in \mathbb{N}^+$, the moving average heuristic over past messages for node affinity prediction is defined as: 
\begin{equation*}\label{moving-average-raw}
    \mathbf{\hat{y}}_t[u, v] = \frac{1}{k} \sum_{t' \in M(u, v, t, k)} e_{uv}(t')
\end{equation*}
where $M(u, v, t, k)$ returns $k$ ordered timestamps that constitute the $k$ most-recent messages sent from node $u$ to node $v$ up to time $t$ and $e_{uv}(t')$ is the scalar event message passed from node $u$ to node $v$ during their interaction at time $t'$. Given this observation, we focus on TGNs and study if there exists a formulation of TGN that can \emph{exactly represent a moving average of order $k$}. Our first important result is proving that this cannot be the case: 
\begin{theorem}\label{thm:no-moving-average-tgn}
    No formulation of TGN can represent a moving average of order $k \in \mathbb{N}^+$ for any temporal graph with a bounded number of vertices.
\end{theorem}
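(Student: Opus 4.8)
The plan is to exploit the fact that every building block of a TGN --- the shared memory initialization $\mathbf{s}_0$, the source and destination message functions, the message aggregator, the memory updater, the embedding module, and the affinity readout --- is a fixed function of memory states, scalar event messages, and elapsed times \emph{only}, and never of node identities. Consequently a TGN is equivariant under relabelings of the vertex set: if $\sigma$ is an automorphism of a temporal graph (a relabeling of the vertices that maps each interaction to an interaction with the same timestamp and the same scalar message), then the model computes $\mathbf{z}_{\sigma(x)}(t) = \mathbf{z}_x(t)$ for every node $x$ and time $t$, where $\mathbf{z}_x(t)$ denotes the representation the TGN assigns to $x$ at time $t$. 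Since the predicted affinity of a source node towards a target node is a fixed function of the two nodes' representations with no further dependence on identities, it follows that $\mathbf{\hat{y}}_t[v,a] = \mathbf{\hat{y}}_t[v',a]$ whenever $\mathbf{z}_v(t) = \mathbf{z}_{v'}(t)$. The strategy is therefore to build, on a constant number of vertices, a temporal graph with an automorphism that identifies two source nodes whose order-$k$ moving averages over messages differ.

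For the construction, fix $k$ and take four vertices $v, v', a, b$. For each $i = 1,\dots,k$ create at time $i$ the interaction $v \to a$ carrying scalar message $0$ and the interaction $v' \to b$ carrying message $0$; and for each $i = 1,\dots,k$ create at time $k+i$ the interaction $v \to b$ carrying message $1$ and the interaction $v' \to a$ carrying message $1$. The double transposition $\sigma = (v\; v')(a\; b)$ sends every interaction of this graph to another interaction with the same timestamp and the same message, so $\sigma$ is an automorphism; hence $\mathbf{z}_v(t) = \mathbf{z}_{\sigma(v)}(t) = \mathbf{z}_{v'}(t)$ and, by the previous paragraph, any TGN satisfies $\mathbf{\hat{y}}_t[v,a] = \mathbf{\hat{y}}_t[v',a]$ for every $t > 2k$. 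On the other hand, the $k$ most recent messages sent from $v$ to $a$ are $k$ copies of $0$, while the $k$ most recent messages sent from $v'$ to $a$ are $k$ copies of $1$, so the order-$k$ moving average prescribes $\mathbf{\hat{y}}_t[v,a] = 0 \neq 1 = \mathbf{\hat{y}}_t[v',a]$. No TGN can satisfy both, and since this argument is valid for every choice of the TGN's functions and uses only four vertices, no formulation of TGN represents the order-$k$ moving average on temporal graphs with a bounded number of vertices. Taking $k = 1$ recovers persistent forecasting as a special case.

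The delicate point is not the construction but making the quantifier ``no formulation'' precise enough that the equivariance argument applies: one must fix the class of affinity readouts in scope --- e.g.\ a pairwise decoder $\mathrm{dec}(\mathbf{z}_v(t), \mathbf{z}_a(t))$, a shared per-target readout applied to each candidate node's representation, or an MLP $\mathbf{z}_v(t) \mapsto \mathbb{R}^{n}$ producing one coordinate per candidate --- and check in each case that the predicted affinity of $v$ towards $a$ depends on $v$ only through $\mathbf{z}_v(t)$, so that $\mathbf{z}_v(t) = \mathbf{z}_{v'}(t)$ indeed forces $\mathbf{\hat{y}}_t[v,a] = \mathbf{\hat{y}}_t[v',a]$. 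A minor accompanying point is that the equivariance claim presumes the vertices carry no distinguishing input features; this is automatic in the feature-free TGB node-affinity setting, and otherwise one simply gives $v,v'$ equal features and $a,b$ equal features in the construction. Everything else --- verifying that $\sigma$ is an automorphism and evaluating the moving average --- is routine, and the same four-vertex graph serves for every order $k$.
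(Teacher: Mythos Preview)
Your proof is correct and rests on the same core observation as the paper's --- that TGN messages carry no node identities, so the model cannot distinguish certain relabeled configurations --- but the packaging differs. The paper builds two separate three-node graphs $G$ and $G'$ (with the recipients of Node~1's two message streams swapped) and explicitly traces the message, memory, and embedding computations to show that Node~1's embedding is identical in both, contradicting the fact that the moving-average target vector $[0,\bar\alpha,\bar\beta]$ should differ from $[0,\bar\beta,\bar\alpha]$. You instead work inside a single four-node graph with an explicit automorphism $(v\;v')(a\;b)$ and invoke equivariance abstractly, which is cleaner and sidesteps the line-by-line bookkeeping. Your version also makes the decoder assumption explicit and checks it for several readout shapes (the paper implicitly relies on a per-source MLP $\mathbf{z}_v \mapsto \mathbb{R}^n$), which is a genuine clarification. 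Both constructions exploit the same mechanism; yours is arguably more conceptual and covers pairwise decoders as well, while the paper's uses one fewer vertex and avoids simultaneous events.
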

We prove \Cref{thm:no-moving-average-tgn} in \Cref{appendix:thm:no-moving-average-tgn}. In short, the proof constructs a minimal example of two nodes in two graphs sending different messages. We show that TGNs cannot distinguish the two nodes, leading them to compute the same moving average for both nodes.  This is a direct consequence of the permutation-invariance of TGNs, which renders them unable to discriminate between senders and receivers of messages, and in turn incapable of capturing important functions.

Since the above theorem holds for any $k$ and persistent forecasting is equivalent to a moving average when $k = 1$, it follows that TGNs cannot represent persistent forecasting. Proceeding similarly to our proof for \Cref{thm:no-moving-average-tgn}, we can show that, more generally, TGNs cannot represent the class of autoregressive functions (proof in \Cref{appendix:tgn-autoregressive-proof}):
\begin{corollary}
    No formulation of TGN can represent an autoregressive model of order $k \in \mathbb{N}^+$ for any temporal graph with a bounded number of vertices.
\end{corollary}

\subsection{TGNv2: Increasing the expressive power of TGNs}
The main problem with TGN lies in the construction of the messages when an event occurs. In TGN, for every interaction between nodes $i$ and $j$, two messages are constructed: 
\begin{align*}
    \mathbf{m}_i(t) = \text{msg}_s(\mathbf{s}_i(t^-), \mathbf{s}_j(t^-), \phi(\Delta t), e_{ij}(t)); 
    && \mathbf{m}_j(t) = \text{msg}_d(\mathbf{s}_j(t^-), \mathbf{s}_i(t^-), \phi(\Delta t), e_{ij}(t))
\end{align*}
If we look closely, however, we can see that each message does not contain the source or the destination of the message. Not only does this make it impossible for the memory vectors to have an imprint of past interactions, but this also renders TGNs to be \textit{invariant} to the identities of the senders and receivers of messages--a property that is undesirable for dynamic node affinity prediction. To address this issue, we introduce TGNv2, where we modify the message construction of TGNs to include source-target identification:
\begin{align*}
    \mathbf{m}_i(t) &= \text{msg}_s(\mathbf{s}_i(t^-), \mathbf{s}_j(t^-), \phi_t(\Delta t), e_{ij}(t), \phi_n(i), \phi_n(j)) \\
    \mathbf{m}_j(t) &= \text{msg}_d(\mathbf{s}_j(t^-), \mathbf{s}_i(t^-), \phi_t(\Delta t), e_{ij}(t), \phi_n(j), \phi_n(i))
\end{align*}
Here, we map all nodes to an arbitrary, but fixed node index, and $\phi_n \in \mathbb{R} \to \mathbb{R}^d$ is an encoder function for node indices, similar to $\phi_t$. Incoming nodes that have not been encountered before are assigned to fresh, unused node indices as the graph evolves.  This modification is a way to break the permutation-invariance of TGN, which is necessary to compute moving averages and autoregressive models. We are now able to prove:
\begin{theorem}\label{thm:tgnv2-proof}
     There exists a formulation of TGNv2 that can represent persistent forecasting, moving average of order $k \in \mathbb{N}^+$, or any autoregressive model of order $k \in \mathbb{N}^+$ for any temporal graph with a bounded number of vertices.
\end{theorem}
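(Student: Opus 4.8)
The plan is to exhibit one explicit family of TGNv2 instances --- a choice of node encoder $\phi_n$, message functions $\text{msg}_s,\text{msg}_d$, aggregator, memory updater, embedding module, and affinity decoder --- and to verify directly that it computes the order-$k$ moving average on every temporal graph with at most $N$ vertices, with persistent forecasting and arbitrary order-$k$ autoregressive prediction obtained by changing only the decoder. The idea is to turn the node memory into an \emph{addressable shift register}: lay out $\mathbf{s}_u(t)$ as $N$ blocks of $k$ scalars (plus a small counter per block), where block $v$ holds, in order, the $k$ most recent scalar messages $u$ has sent to $v$ up to time $t$. The one ingredient that makes this realisable --- and that TGN provably lacks by \Cref{thm:no-moving-average-tgn} --- is the source--target tag now carried \emph{inside} each message.

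Concretely, I would first take $\phi_n$ to be \emph{injective}, e.g. the one-hot encoding of the node index into $\mathbb{R}^N$; this is exactly where the bounded-vertices hypothesis enters, keeping all component functions finite-dimensional. I would then define $\text{msg}_s$ so that an interaction $u\to v$ at time $t$ yields $\mathbf{m}_u(t)=\bigl(e_{uv}(t),\,\phi_n(v)\bigr)$ --- the scalar value tagged by the destination identity --- and $\text{msg}_d$ so that $\mathbf{m}_v(t)=\mathbf{0}$ is a null token (destinations need not track anything for a moving average over \emph{sent} messages; the symmetric variant handles received messages). I would choose the aggregator to keep the messages arriving at a node at a single timestamp as an ordered list, and the memory updater $\text{mem}(\bar{\mathbf{m}}_u(t),\mathbf{s}_u(t^-))$ to fold over that list: for each tagged message $(e,\phi_n(v))$, read $v$ off the one-hot, shift block $v$ by one slot, write $e$ into its newest slot, and bump that block's counter up to $k$; null tokens act as the identity. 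Because the tag $\phi_n(v)$ is part of the message, $\text{mem}$ can select the right block --- precisely the step impossible in permutation-invariant TGN. Finally I would set the embedding module to the identity (a permitted choice of embedding module), $\mathbf{z}_u(t)=\mathbf{s}_u(t)$, and let the decoder return $\hat{\mathbf{y}}_t[u,v]$ as the normalised mean of block $v$ of $\mathbf{s}_u(t)$; by construction this equals $\tfrac1k\sum_{t'\in M(u,v,t,k)} e_{uv}(t')$. Persistent forecasting is the $k=1$ case of this family, and an arbitrary order-$k$ autoregressive model is obtained by keeping $\phi_n,\text{msg}_s,\text{msg}_d,\text{agg},\text{mem}$ fixed and replacing the decoder's averaging by the application of autoregressive coefficients $\sum_{i=1}^k a_i\,(\text{$i$-th most recent value})$ --- so a single construction settles all three parts of \Cref{thm:tgnv2-proof}, and, together with \Cref{thm:no-moving-average-tgn} and the corollary above, the strict separation asserted in the text.

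Two things then need checking. First, that every component lies in the class a ``formulation of TGNv2'' may use: this is routine, since $\text{msg}_s,\text{msg}_d,\text{mem}$ and the decoder are each piecewise-linear with finitely many pieces (one-hot lookups, coordinate shifts, thresholded counters, linear/gated combinations), hence realisable by the standard (MLP/RNN-type) components, and the embedding module is the identity. Second --- and this is the only genuine friction I anticipate --- the treatment of \emph{simultaneous events}: I need the formalism to allow a memory update that consumes an ordered batch of messages at one timestamp rather than a single aggregated vector, and a fixed tie-break for interactions sharing the same source, destination, and timestamp, so that $M(u,v,t,k)$ and the shift register agree on the ordering. I would resolve this by naming the ordered-batch aggregator explicitly as the chosen $\text{agg}$ (a legitimate choice of aggregator) and carrying the same ordering convention into the definition of $M$; the fewer-than-$k$-messages regime is a minor sub-case handled by the per-block counter. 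With these in place the remaining verification is bookkeeping.
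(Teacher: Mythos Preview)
Your proposal is correct and mirrors the paper's own construction: both turn the memory vector into $N$ addressable blocks of $k$ slots (a per-destination shift register), use the destination tag carried in $\text{msg}_s$ to select and shift the correct block, and recover moving average / persistent forecast / autoregression via a final linear readout with appropriate coefficients. The only differences are cosmetic --- the paper spells out the shift-and-store step with explicit block-permutation and shift matrices, places the averaging in the embedding layer rather than a separate decoder, handles $\text{msg}_d$ by a droppable tag bit rather than a null token, and handles the fewer-than-$k$ regime by zero-padding rather than a per-block counter --- but the underlying idea is the same.
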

Our proof of \Cref{thm:tgnv2-proof}  (\Cref{appendix:tgnv2-proof}) leverages the existence of the node identification to `index' into the memory vector to store information. From this, it follows that TGNv2 is strictly more expressive than TGN, as TGN is a special case of TGNv2.
\section{Experiments}
\Cref{main-results} shows our experimental results on the TGB benchmark. The top 3 rows are simple heuristics over ground-truth labels / ground-truth messages. `Persistent Frcst (L)' and `Moving Average (L)' refer to persistent forecasting and moving average over ground-truth labels respectively; while `Moving Avg (M)' is a moving average over messages. The rest of the rows correspond to TG models. We describe the experimental details in \Cref{appendix:experiment-details}.

Evidently from \Cref{main-results}, Moving Average (M) is a competitive method that outperforms all TG models. TGN (tuned) denotes the TGN that we trained using the same set of hyperparameters for TGNv2. Though TGN enjoys a performance boost with this set of hyperparameters, TGNv2 significantly outperforms TGN and all TG models on all datasets. Further, we can see that TGNv2 performs comparably to Moving Avg (M) on \texttt{tgbn-trade, tgbn-genre, tgbn-reddit} while TGN is beaten by Moving Avg (M) on all datasets. 
\begin{table}[!h]
  \caption{Main results. $\dagger$ are results obtained from \cite{tgb-paper}, while $\ddagger$ are obtained from \cite{an-empirical-evaluation-of-tgb}. TGNv2 outperforms all current TG models by a large margin.}
  \label{main-results}.  
  \centering
  \resizebox{\textwidth}{!}{
      \begin{tabular}{l| ll  ll  ll  ll }
        \toprule
        \multirow{3}{*}{Method} & \multicolumn{2}{c}{\texttt{tgbn-trade}} & \multicolumn{2}{c}{\texttt{tgbn-genre}} & \multicolumn{2}{c}{\texttt{tgbn-reddit}} & \multicolumn{2}{c}{\texttt{tgbn-token}}\\
        & \multicolumn{2}{c}{\textbf{NDCG @ 10 $\uparrow$}} & \multicolumn{2}{c}{\textbf{NDCG @ 10 $\uparrow$}} & \multicolumn{2}{c}{\textbf{NDCG @ 10 $\uparrow$}} & \multicolumn{2}{c}{\textbf{NDCG @ 10 $\uparrow$}} \\
        & Validation & Test & Validation & Test & Validation & Test & Validation & Test \\
        \midrule
        $\text{Persistent Frcst (L)}^\dagger$ & \textbf{0.860} &  \textbf{0.855}  & 0.350 & 0.357 & 0.380 & 0.369 & 0.403 & 0.430  \\
        $\text{Moving Avg (L)}^\dagger$ & 0.841 & 0.823 & \textbf{0.499}  &  \textbf{0.509} & \textbf{0.574} & \textbf{0.559} & \textbf{0.491} & \textbf{0.508} \\
        $\text{Moving Avg (M)}$ & 0.793 & 0.777 & 0.478  &  0.472 & 0.499 & 0.481 & 0.402 & 0.415 \\
        \midrule
        $\text{JODIE}^\ddagger$ &  $0.394_{\pm 0.05}$  &  $0.374_{\pm 0.09}$  &  $0.358_{\pm 0.03}$  & $0.350_{\pm 0.04}$  & 	0.345\textsubscript{$\pm 0.02$} & 0.314\textsubscript{$\pm0.01$} & & \\ 
        $\text{TGAT}^\ddagger$ &  $0.395_{\pm 0.14}$  &  $0.375_{\pm 0.07}$  &  $0.360_{\pm 0.04}$  & $0.352_{\pm 0.03}$  & 	0.345\textsubscript{$\pm 0.01$} & 0.314\textsubscript{$\pm0.01$} & & \\ 
        $\text{CAWN}^\ddagger$ &  $0.393_{\pm 0.07}$  &  $0.374_{\pm 0.09}$  & &  & &  & & \\ 
        $\text{TCL}^\ddagger$ &  $0.394_{\pm 0.11}$  &  $0.375_{\pm 0.09}$  &  $0.362_{\pm 0.04}$  & $0.354_{\pm 0.02}$  & 	0.347\textsubscript{$\pm 0.01$} & 0.314\textsubscript{$\pm0.01$} & & \\ 
        $\text{GraphMixer}^\ddagger$ &  $0.394_{\pm 0.17}$  &  $0.375_{\pm 0.11}$  &  $0.361_{\pm 0.04}$  & $0.352_{\pm 0.03}$  & 	0.347\textsubscript{$\pm 0.01$} & 0.314\textsubscript{$\pm0.01$} & & \\ 
        $\text{DyGFormer}^\ddagger$ &  $0.408_{\pm 0.58}$  &  $0.388_{\pm 0.64}$  &  $0.371_{\pm 0.06}$  & $0.365_{\pm 0.20}$  & 	0.348\textsubscript{$\pm 0.02$} & 0.316\textsubscript{$\pm0.01$} & & \\ 
        $\text{DyRep}^\dagger$ &  $0.394_{\pm 0.001}$  &  $0.374_{\pm 0.001}$  &  $0.357_{\pm 0.001}$  & $0.351_{\pm 0.001}$  & 	0.344\textsubscript{$\pm 0.001$} & 0.312\textsubscript{$\pm0.001$} & 0.151\textsubscript{$\pm0.006$} & 0.141\textsubscript{$\pm 0.006$} \\ 
        $\text{TGN}^\dagger$ & $0.395_{\pm 0.002}$  &  $0.374_{\pm 0.001}$  &  $0.403_{\pm 0.010}$ & $0.367_{\pm 0.058}$ & 0.379\textsubscript{$\pm$ 0.004} & 0.315\textsubscript{$\pm$ 0.020} & 0.189\textsubscript{$\pm 0.005$} & 0.169\textsubscript{$\pm 0.006$}\\
        TGN (tuned) & 0.445\textsubscript{$\pm 0.009$}&  0.409\textsubscript{$\pm 0.005$}&  0.443\textsubscript{$\pm 0.002$} & 0.423\textsubscript{$\pm 0.007$} & 0.482\textsubscript{$\pm 0.007$} & 0.408\textsubscript{$\pm 0.006$} & 0.251\textsubscript{$\pm 0.000$} & 0.200\textsubscript{$\pm 0.005$} \\
        TGNv2 (ours) & \textbf{0.807\textsubscript{$\pm$ 0.006}} & \textbf{0.735\textsubscript{$\pm$ 0.006}} & \textbf{0.481\textsubscript{$\pm$ 0.001}} & \textbf{0.469\textsubscript{$\pm$ 0.002}} & \textbf{0.544\textsubscript{$\pm$ 0.000}} & \textbf{0.507\textsubscript{$\pm 0.002$}} & \textbf{0.321\textsubscript{$\pm$ 0.001}} & \textbf{0.294\textsubscript{$\pm$ 0.001}} \\
        \bottomrule
      \end{tabular}
    }
\end{table}

\section{Related Work}
We believe this work is the first to address the limitations of TG models in dynamic node affinity prediction. \citet{tgb-paper} were the first to point out this problem, highlighting that TGN and DyRep \citep{dyrep-paper} are outperformed by heuristics over ground-truth labels. \cite{an-empirical-evaluation-of-tgb} extended this work and found that a suite of other TG models (JODIE \citep{jodie}, TGAT \citep{tgat-paper}, CAWN \citep{cawn}, TCL \citep{tcl}, GraphMixer \citep{graph-mixer}, and DyGFormer \citep{le-yu-2023-neurips}) all underperform in dynamic node affinity prediction. Despite still lagging behind heuristics over ground-truth labels, TGNv2 significantly outperforms all of the methods above, constituting what we believe to be \textbf{the first positive result in improving TG models for dynamic node affinity prediction}. Our method of augmenting TGNs with source-target identification to increase expressivity is most similar to the work of \citet{sato-port-numbering}, where they increased the expressivity of static GNNs via port numbering. Relatedly, other works demonstrated that breaking the permutation-invariance of static GNNs (e.g. by using RNNs to aggregate messages) led to empirical benefits \citep{rnar, graphsage}.

\section{Conclusion}
In this paper, we proposed to augment TGN with source-target identification. We proved that TGNv2 is strictly more expressive than TGN and consequently showed that TGNv2 achieves significantly higher performance than current TG models across all dynamic node affinity prediction datasets from TGB. In the future, we would like to close the remaining empirical gap between TGNv2 and the heuristics approaches. We believe this is because we formulated our message aggregator to output the last message, which was necessary to compare our results fairly with prior TGN experiments (\Cref{appendix:experiment-details}). To address this, we hope to explore more expressive aggregation functions. Moreover, we would like to further develop our work by studying TGNv2 on other TG tasks, such as dynamic link prediction.

\bibliography{refs}

\newpage
\appendix

\section{TGN Recap}
For the reader's convenience, we restate the core modules of TGN. For a more thorough explanation of each module, we refer the reader to the original paper \citep{tgn-paper}.

\paragraph{Message Function.}
For each interaction between $i$ and $j$, we construct two messages $\text{msg}_s$ and $\text{msg}_d$: 
\begin{align*}
\label{message-function-update}
    \mathbf{m}_i(t) = \text{msg}_s(\mathbf{s}_i(t^-), \mathbf{s}_j(t^-), \phi(\Delta t), e_{ij}(t)); 
    && \mathbf{m}_j(t) = \text{msg}_d(\mathbf{s}_j(t^-), \mathbf{s}_i(t^-), \phi(\Delta t), e_{ij}(t))
\end{align*}
We can also opt to construct node messages if node events exist:
\begin{align*}
    \mathbf{m}_i(t) = \text{msg}_n(\mathbf{s}_i(t^-), t, \mathbf{v}_i(t))
\end{align*}

\paragraph{Message Aggregator.}
\begin{align*}
    \bar{\mathbf{m}}_i(t) = \text{agg}(\mathbf{m}_i(t_1), ..., \mathbf{m}_i(t_b))
\end{align*}

\paragraph{Memory Updater.}
\begin{align*}
    \mathbf{s}_i(t) = \text{mem}(\bar{\mathbf{m}}_i(t), \mathbf{s}_i(t^-)))
\end{align*}

\paragraph{Embedding}
\begin{align*}
    \mathbf{z}_i(t) = g\bigl( \bigl \{ \bigl \{ h(\mathbf{s}_i(t), \mathbf{s}_j(t), e_{ij}, \mathbf{v}_i(t), \mathbf{v}_j(t)) : j \in \mathcal{N}_i^L([0, t])  \bigr\} \bigr \} \bigr)
\end{align*}
Here, $h$ is a learnable function, $g$ is a permutation-invariant function such as a sum or mean, and $L$ corresponds to the number of layers used for temporal message passing. We note that our formulation of the embedding layer is a more general version than in the TGN paper, and we can recover the original formulation by setting $g$ to be a sum. 

\section{Proofs}
\subsection{Proof for Theorem 1}\label{appendix:thm:no-moving-average-tgn}
\setcounter{theorem}{0}
\begin{theorem}
    No formulation of TGN can represent a moving average of order $k \in \mathbb{N}^+$ for any temporal graph with a bounded number of vertices.
\end{theorem}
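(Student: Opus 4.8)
\emph{Proof proposal.} The plan is to turn the permutation-invariance of TGN into a rigid symmetry that a moving average is forced to violate. The key observation is that none of TGN's modules --- $\mathrm{msg}_s,\mathrm{msg}_d$, $\mathrm{agg}$, $\mathrm{mem}$, the embedding maps $h,g$, or whatever read-out produces $\mathbf{\hat{y}}_t[u,v]$ --- ever takes a node identity as input; each sees only memory vectors $\mathbf{s}$, time encodings $\phi(\Delta t)$, and edge features. Hence, if $\sigma$ is a bijection of the vertex set that maps the time-stamped, edge-attributed event stream onto itself, then running \emph{any} fixed formulation of TGN on that stream yields $\mathbf{s}_{\sigma(x)}(t)=\mathbf{s}_x(t)$ and $\mathbf{z}_{\sigma(x)}(t)=\mathbf{z}_x(t)$ for every vertex $x$ and time $t$. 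I would first isolate this as an automorphism-equivariance lemma and prove it by induction over the ordered list of events (the base case uses that all initial memories are equal), with an inner induction on the number of embedding layers $L$, using that $\sigma$ carries $\mathcal{N}^L_x$ bijectively onto $\mathcal{N}^L_{\sigma(x)}$.

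Next I would build the witness. Fix $k\in\mathbb{N}^+$ and take the temporal graph on four vertices $\{a,b,c,d\}$ in which: at $k$ increasing times $t_1<\cdots<t_k$ the events ``$a\to c$ with scalar $1$'' and ``$b\to d$ with scalar $1$'' occur; and at $k$ later times $t_{k+1}<\cdots<t_{2k}$ the events ``$a\to d$ with scalar $0$'' and ``$b\to c$ with scalar $0$'' occur. Two design choices make the permutation $\sigma$ swapping $a\leftrightarrow b$ and $c\leftrightarrow d$ an honest automorphism: the two events sharing a timestamp always have disjoint endpoints, so at each of its active times every vertex receives exactly one message and the aggregator faces no order ambiguity; and $a,b$ (resp.\ $c,d$) are active at exactly the same times, so every elapsed time $\Delta t$ is preserved by $\sigma$. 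One checks directly that $\sigma$ sends each event to another event present at the same time with the same scalar (the underlying graph is the $4$-cycle $a\!-\!c\!-\!b\!-\!d\!-\!a$, of which $\sigma$ is a symmetry).

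Finally, by the lemma, at any query time $T>t_{2k}$ we get $\mathbf{z}_c(T)=\mathbf{z}_d(T)$ (and $\mathbf{z}_a(T)=\mathbf{z}_b(T)$). Any formulation of TGN predicts $\mathbf{\hat{y}}_T[u,v]$ from $\mathbf{z}_u(T)$ and $\mathbf{z}_v(T)$ (no interaction occurs at a query time, so no edge feature enters the read-out; static node features may be taken uniform), so the read-out input for the query $(a,c)$ is literally the same as for $(a,d)$, forcing $\mathbf{\hat{y}}_T[a,c]=\mathbf{\hat{y}}_T[a,d]$. But $a$ sent to $c$ exactly the $k$ scalars $1,\dots,1$ and to $d$ exactly the $k$ scalars $0,\dots,0$, so the order-$k$ moving average over messages gives $\mathbf{\hat{y}}_T[a,c]=1\neq 0=\mathbf{\hat{y}}_T[a,d]$ --- a contradiction. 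Since the witness has only four vertices, no formulation of TGN can represent the order-$k$ moving average on temporal graphs with any vertex bound at least $4$; setting $k=1$ subsumes persistent forecasting.

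The combinatorial construction is easy; the step I expect to be delicate is the equivariance lemma under the quantifier ``no formulation of TGN''. The argument must remain valid for every admissible $\mathrm{agg}$ --- including order-sensitive ones, which is exactly why the event stream is arranged so that no vertex ever receives two messages at the same timestamp --- for arbitrarily many embedding layers, and for arbitrary but node-agnostic read-outs; it also has to guarantee that $\phi(\Delta t)$ and any node features are genuinely $\sigma$-invariant, which is what forces the symmetric timing of $a$'s and $b$'s activity and the uniform memory initialization.
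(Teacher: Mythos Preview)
Your proposal is correct and exploits the same underlying weakness the paper targets --- TGN's message, memory, and embedding modules never see node identities, so a node permutation that preserves the attributed event stream must fix all computed states --- but you package the argument differently.

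The paper works with \emph{two} three-node graphs $G$ and $G'$ obtained from one another by swapping the two recipients of Node~1's messages; it then tracks the memory and embedding computations in both graphs and shows $\mathbf{z}_1^G(t_T)=\mathbf{z}_1^{G'}(t_T)$, contradicting the fact that the index-by-target output $[0,\bar\alpha,\bar\beta]$ should become $[0,\bar\beta,\bar\alpha]$ in $G'$. You instead build a \emph{single} four-node graph with an internal automorphism $\sigma$, prove a general equivariance lemma by induction on events (and layers), and conclude $\mathbf{z}_c=\mathbf{z}_d$, so a pairwise read-out cannot separate $(a,c)$ from $(a,d)$. Your route is slightly heavier (one extra vertex, an explicit lemma) but buys robustness: by arranging that no vertex receives two messages at the same timestamp and that paired vertices are active at identical times, you make the argument go through for order-sensitive aggregators and arbitrary time encodings without the caveats the paper handles implicitly. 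The paper's two-graph construction is marginally leaner (three vertices) and matches its chosen output format ($\mathbf{z}_i[j]$ as a target-indexed vector), whereas your pairwise read-out assumption is closer to how affinity is decoded in practice; either choice suffices to force the contradiction.
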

\begin{proof}
The main idea of the proof is that TGNs are unable to distinguish nodes whose messages are identical in every form but have different senders and/or recipients. To show this, we construct a temporal graph $G$ with 3 nodes (Node 1, 2, and 3) and `flip' it in such a way to yield a $G'$ such that Node 1 in $G'$ is sending different messages when compared to Node 1 in $G$ but is indistinguishable from Node 1 in $G$ from the point of view of TGNs (\Cref{fig:tgn-indistinguishable}).

\begin{figure}[htbp]
\centering
  \label{fig:tgn-indistinguishable}
  \caption{Graphs $G$ and $G'$. Clearly Node 1 in $G$ and $G'$ are sending different sequences of messages, but TGNs are unable to distinguish them.}
  \includegraphics[width=\textwidth, height=0.23\textheight, keepaspectratio]{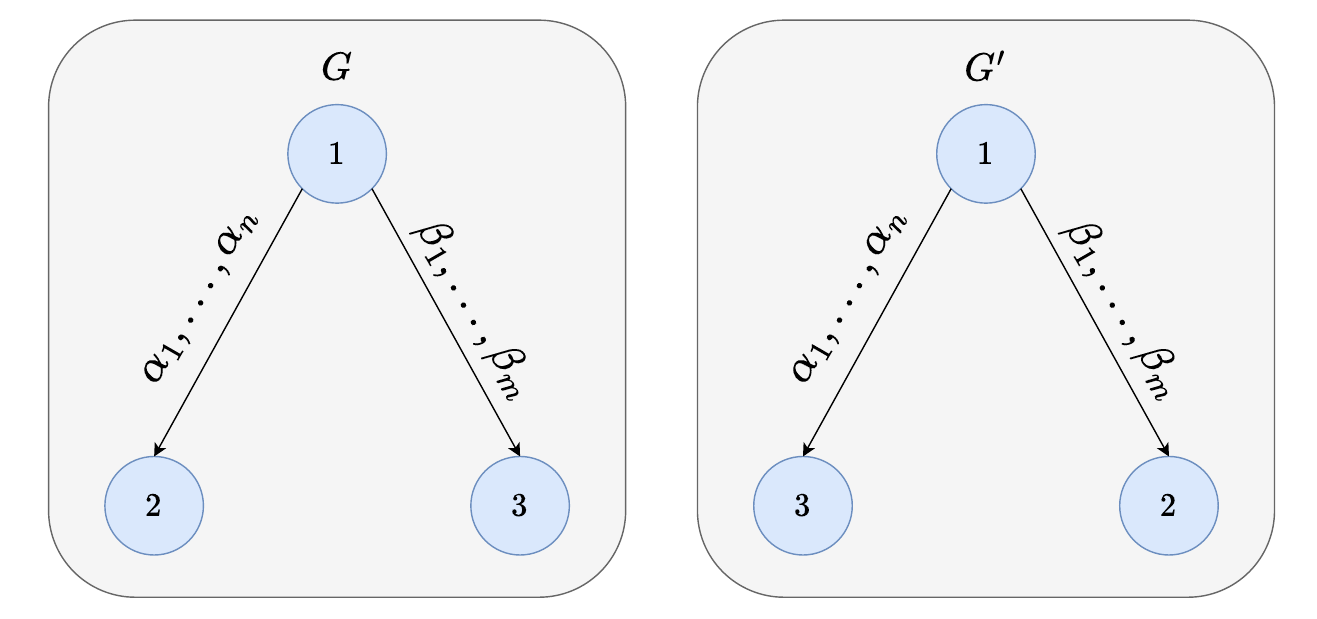}
\end{figure}

We proceed by way of contradiction. Assume that there exists a particular formulation of TGN that can implement a moving average of order $k$ for any temporal graph with a bounded number of vertices. We initialise all memory vectors to be the zero vector, as per TGN's original formulation. Now, consider the following sequence of events that implicitly define the temporal graph $G$: 
\begin{enumerate}
    \item Node 1 sends Node 2 a series of $n$ events with features $\alpha_1, \dots, \alpha_n$ at timestamps $t_1, \dots, t_n$.
    \item Node 1 sends Node 3 a series of $m$ events with features $\beta_1, \dots, \beta_m$ at timestamps $t_{n + 1}, \dots, t_{n + m}$.
\end{enumerate}
where $t_1 < \dots < t_n < t_{n + 1} < \dots < t_{n + m}$, $n \ge k$, and $m \ge k$. Let $\bar{\alpha} = \frac{\alpha_{n - k + 1} + \dots + \alpha_n}{k}$, the moving average of order $k$ of $\alpha_1, ..., \alpha_n$. Similarly, define $\bar{\beta}$ to be the moving average of order $k$ of $\beta_1, ..., \beta_m$. Suppose we compute Node 1's embedding at time $t_T$ where $t_T > t_{n + m}$. We assume that no node updates are done, and all $\mathbf{v}_1(t) = \mathbf{v}_2(t) = \mathbf{v}_3(t)$ for all $t$. Node 1 receives $n$ messages from its interactions with Node 2: 
\begin{equation*}
   \mathbf{m}_1(t_i) = \text{msg}_s(\mathbf{0}, \mathbf{0}, t_i , \alpha_i) \qquad \forall i \in [ 1, \dots, n ]
\end{equation*}
Similarly, Node 1 receives $m$ messages from its interactions with Node 3:
\begin{equation*}
   \mathbf{m}_1(t_{n + i}) = \text{msg}_s(\mathbf{0}, \mathbf{0}, t_{n + i} , \beta_i)  \qquad \forall i \in [1, \dots, m] 
\end{equation*}
Node 1 then aggregates the messages it receives and updates its memory: 
\begin{align*}
    \mathbf{\tilde{m}}_1(t_T) &= \text{agg}(\mathbf{m}_1(t_1), \dots, \mathbf{m}_1(t_{n + m})) \\
    \mathbf{s}_1(t_T) &= \text{mem}(\mathbf{\tilde{m}}_1(t_T), \mathbf{0})
\end{align*}
We can do the same set of calculations for Node 2 and Node 3: 
\begin{align*}
    \mathbf{m}_2(t_{i}) &= \text{msg}_d(\mathbf{0}, \mathbf{0}, t_i, \alpha_i)  \qquad \forall i \in [ 1, \dots, n ] \\
    \mathbf{m}_3(t_{n + i}) &= \text{msg}_d(\mathbf{0}, \mathbf{0}, t_{n + i}, \beta_i)  \qquad \forall i \in [ 1, \dots, m ] \\
    \mathbf{\tilde{m}}_2(t_T) &= \text{agg}(\mathbf{m}_2(t_1), \dots, \mathbf{m}_2(t_{n})) \\ 
    \mathbf{\tilde{m}}_3(t_T) &= \text{agg}(\mathbf{m}_3(t_{n + 1}), \dots, \mathbf{m}_3(t_{n + m})) \\
    \mathbf{s}_2(t_T) &= \text{mem}(\mathbf{\tilde{m}}_2(t_T), \mathbf{0}) \\
    \mathbf{s}_3(t_T) &= \text{mem}(\mathbf{\tilde{m}}_3(t_T), \mathbf{0}) 
\end{align*}
Then, as we have assumed no node events have occurred, and all $\mathbf{v}_i(t_T)$ are the same, then we can ignore them during the embedding computation: 
\begin{align*}
    \mathbf{z}_1(t_T) &= g\bigl( \bigl\{ \bigl\{ h(\mathbf{s}_1(t_T), \mathbf{s}_2(t_T), \alpha_i) : i \in [1, \dots n] \bigr\} \bigr\} \cup \bigl\{ \bigl\{ h(\mathbf{s}_1(t_T), \mathbf{s}_3(t_T), \beta_i) : i \in [1, \dots m] \bigr\} \bigr\} \bigr) \\
                      &= [0, \bar{\alpha}, \bar{\beta}]^T
\end{align*}
as per our assumption. Consider now the flipped temporal graph $G'$ with events: 
\begin{enumerate}
    \item Node 1 sends Node 3 a series of $n$ events with features $\alpha_1, \dots, \alpha_n$ at timestamps $t_1, \dots, t_n$.
    \item Node 1 sends Node 2 a series of $m$ events with features $\beta_1, \dots, \beta_m$ at timestamps $t_{n + 1}, \dots, t_{n + m}$.
\end{enumerate}
where $t_1 < \dots < t_n < t_{n + 1} < \dots < t_{n + m}$, $n \ge k$, $m \ge k$ as before. Suppose we are again to compute the node embeddings at time $t_T > t_{n + m}$. Node 1 receives $n$ messages from its interactions with Node 3:
\begin{align*}
   \mathbf{m}'_1(t_i) = \text{msg}_s(\mathbf{0}, \mathbf{0}, t_i , \alpha_i) \qquad \forall i \in [ 1, \dots, n ]
\end{align*}
Node 1 receives $m$ messages from its interactions with Node 2: 
\begin{align*}
   \mathbf{m}'_1(t_i) = \text{msg}_s(\mathbf{0}, \mathbf{0}, t_{n + i} , \beta_i) \qquad \forall i \in [ 1, \dots, m ]
\end{align*}
But observe that Node 1 receives the same set of messages as it did in $G$. Therefore, $\mathbf{s}'_1(t_T)$ must be equal to $\mathbf{s}_1(t_T)$:
\begin{align*}
    \mathbf{\tilde{m}}'_1(t_T) &= \text{agg}(\mathbf{m}'_1(t_1), \dots, \mathbf{m}'_1(t_{n + m})) \\
                               &= \text{agg}(\mathbf{m}_1(t_1), \dots, \mathbf{m}_1(t_{n + m})) \\
                               &= \mathbf{\tilde{m}_1}(t_T) \\
    \mathbf{s}'_1(t_T) &= \text{mem}(\mathbf{\tilde{m}}'_1(t_T), \mathbf{0}) \\
                       &= \text{mem}(\mathbf{\tilde{m}}_1(t_T), \mathbf{0}) \\
                       &= \mathbf{s}_1(t_T)
\end{align*}
Further, we can see that Node 3 in $G'$ receives the same set of messages as Node 2 in $G$, and Node 2 in $G'$ receives the same set of messages as Node 3 in $G$. Following the same reasoning, we can conclude that: 
\begin{align*}
    \mathbf{\tilde{m}}'_2(t_T) &= \text{agg}(\mathbf{m}'_2(t_{n + 1}), \dots, \mathbf{m}'_2(t_{n + m})) \\
                               &= \text{agg}(\mathbf{m}_3(t_{n + 1}), \dots, \mathbf{m}_3(t_{n + m})) \\
                               &= \mathbf{\tilde{m}_3}(t_T) \\
    \mathbf{s}'_2(t_T) &= \text{mem}(\mathbf{\tilde{m}}'_2(t_T), \mathbf{0}) \\
                       &= \text{mem}(\mathbf{\tilde{m}}_3(t_T), \mathbf{0}) \\
                       &= \mathbf{s}_3(t_T) \\
    \mathbf{\tilde{m}}'_3(t_T) &= \text{agg}(\mathbf{m}'_3(t_{1}), \dots, \mathbf{m}'_3(t_{n})) \\
                               &= \text{agg}(\mathbf{m}_2(t_1), \dots, \mathbf{m}_2(t_{n})) \\
                               &= \mathbf{\tilde{m}_2}(t_T) \\
    \mathbf{s}'_3(t_T) &= \text{mem}(\mathbf{\tilde{m}}'_3(t_T), \mathbf{0}) \\
                       &= \text{mem}(\mathbf{\tilde{m}}_2(t_T), \mathbf{0}) \\
                       &= \mathbf{s}_2(t_T)                       
\end{align*}
Therefore:
\begin{align*}
    \mathbf{z}'_1(t_T) &= g \bigl( \bigl\{ \bigl\{ h(\mathbf{s}'_1(t_T), \mathbf{s}'_3(t_T), \alpha_i) : i \in [1, \dots n] \bigr\} \bigr\} \cup \bigl\{ \bigl\{ h(\mathbf{s}'_1(t_T), \mathbf{s}'_2(t_T), \beta_i) : i \in [1, \dots m] \bigr\} \bigr\} \bigr) \\
                       &= g\bigl( \bigl\{ \bigl\{ h(\mathbf{s}_1(t_T), \mathbf{s}_2(t_T), \alpha_i) : i \in [1, \dots n] \bigr\} \bigr\} \cup \bigl\{ \bigl\{ h(\mathbf{s}_1(t_T), \mathbf{s}_3(t_T), \beta_i) : i \in [1, \dots m] \bigr\} \bigr\} \bigr) \\
                       &= \mathbf{z}_1(t_T) \\
                      &= [0, \bar{\alpha}, \bar{\beta}]^T
\end{align*}
which is a contradiction, as a moving average of order $k$ would've computed $[0, \bar{\beta}, \bar{\alpha}]^T$ for $G'$.
\end{proof}

\subsection{Proof for Corollary 2}\label{appendix:tgn-autoregressive-proof}
\begin{corollary}
    No formulation of TGN can represent an autoregressive model of order $k \in \mathbb{N}^+$ for any temporal graph with a bounded number of vertices.
\end{corollary}
\begin{proof}
Our proof for Theorem 2 proceeds very similarly to Theorem 1. Notice that in our proof of Theorem 1, we did not make use of the fact that $\bar{\alpha}$ and $\bar{\beta}$ are moving averages. Therefore, if our auto-regressive model has weights $w_1, \dots, w_{k}$, then we can define $\bar{\alpha}$ and $\bar{\beta}$ to be:
\begin{align*}
    \bar{\alpha} &= \sum_{i = 1}^{k} w_i \alpha_{n - i} \\ 
    \bar{\beta} &= \sum_{i = 1}^{k} w_i \beta_{n - i}
\end{align*}
and consequently proceeding in the same manner as we did in Theorem 1.
\end{proof}

\subsection{Proof for Theorem 3}\label{appendix:tgnv2-proof}
\begin{theorem}
     There exists a formulation of TGNv2 that can represent persistent forecasting, moving average of order $k \in \mathbb{N}^+$, or any autoregressive model of order $k \in \mathbb{N}^+$ for any temporal graph with a bounded number of vertices.
\end{theorem}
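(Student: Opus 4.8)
The plan is to give an explicit construction of the TGNv2 modules, exploiting the freedom to choose $\text{msg}_s,\text{msg}_d,\text{agg},\text{mem},h,g$ (and $\phi_n,\phi_t$, $L$) just as the proof of Theorem 1 treated them as arbitrary. Fix a bound $N$ on the number of vertices, so every node index lies in $\{1,\dots,N\}$; this is where the boundedness hypothesis is used, since it lets the memory dimension be finite. Take $\phi_n(j)=\mathbf{e}_j$, the $j$-th standard basis vector of $\mathbb{R}^N$ (defined arbitrarily on non-integers). I would lay out the memory $\mathbf{s}_i(t)\in\mathbb{R}^{1+Nk}$ as a small ``interaction table'': coordinate $0$ stores $i$'s own index, and for each potential target $j$ a block of $k$ coordinates stores the (up to) $k$ most recent scalar messages $i$ has sent to $j$, held as a shift register with the most recent entry last. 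All memories start at $\mathbf{0}$, corresponding to empty registers (one may additionally store a per-block count if one wants to track the pre-warm-up regime, though the heuristic of Section 2 is only defined once $k$ messages exist).

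Next I would define the message and memory modules to maintain this table. Since the TGNv2 source message $\mathbf{m}_i(t)=\text{msg}_s(\mathbf{s}_i(t^-),\mathbf{s}_j(t^-),\phi_t(\Delta t),e_{ij}(t),\phi_n(i),\phi_n(j))$ now carries $\phi_n(i)$ and $\phi_n(j)$, let $\text{msg}_s$ forward the triple $(\phi_n(i),\phi_n(j),e_{ij}(t))$; then $\text{mem}(\bar{\mathbf{m}}_i(t),\mathbf{s}_i(t^-))$ writes $\phi_n(i)$ into coordinate $0$ and performs the shift-register update on the block selected by $\phi_n(j)$, leaving every other block untouched — an indicator-selected piecewise-linear map, hence a legitimate choice of $\text{mem}$. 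The destination message $\text{msg}_d$ need only forward $\phi_n(j)$, so that each node records its own index into coordinate $0$. Because TGN processes events in chronological order, it suffices to take the batch-size-one formulation (each event its own batch), so $\text{agg}$ is the identity and registers are updated in timestamp order; alternatively one lets $\text{agg}$ emit the time-ordered list of updates within a batch and has $\text{mem}$ apply them sequentially.

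Then I would read the answer off with the embedding layer, setting $L=1$ so that $\mathcal{N}_i^1([0,t])$ is exactly the set of targets $i$ has interacted with. For a neighbor $j$, define $h(\mathbf{s}_i(t),\mathbf{s}_j(t),\dots)$ to output the vector in $\mathbb{R}^N$ that is zero except in the coordinate given by $j$'s stored index (read from coordinate $0$ of $\mathbf{s}_j(t)$, which is well-defined since $j\in\mathcal{N}_i$ forces $j$ to have received a message), where it equals the chosen aggregate of the corresponding block of $\mathbf{s}_i(t)$: the mean of the $k$ entries for the moving average, the appropriate weighted sum of the entries for an autoregressive model, and the single stored entry for persistent forecasting ($k=1$). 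Taking $g$ to be the sum, $\mathbf{z}_i(t)[v]$ then equals $\mathbf{\hat{y}}_t[i,v]$ for every target $v$. Finally, setting $\phi_n\equiv\mathbf{0}$ (or having the message functions ignore their last two arguments) recovers TGN exactly, so TGNv2 subsumes TGN; combined with Theorem 1, this yields strict separation.

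I expect the main obstacle to be the bookkeeping that lets $h$ emit the correct target coordinate using only $\mathbf{s}_i(t)$ and $\mathbf{s}_j(t)$ — this is what forces the ``store your own index in coordinate $0$'' device together with the observation that every target appearing in $\mathcal{N}_i^1$ has necessarily received a message. A secondary point to handle carefully is event batching, which is resolved either by working in the (valid) batch-size-one formulation or by an order-preserving $\text{agg}$ as above.
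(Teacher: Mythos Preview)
Your core construction matches the paper's: use the node identifier supplied by TGNv2's augmented messages to index into a block-structured memory of dimension $\Theta(Nk)$, with each block a shift register holding the $k$ most recent outgoing features to that target, maintained via a batch-size-one / identity-aggregator pipeline. The paper spells out the shift-register update with explicit block-permutation matrices, and handles $\text{msg}_d$ by tagging and dropping (rather than your ``store own index'' trick), but the substance is the same.

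There is, however, a genuine gap in your readout. The embedding multiset $\{\{\,h(\mathbf{s}_i,\mathbf{s}_j,e_{ij},\dots):j\in\mathcal N_i^L([0,t])\,\}\}$ ranges over \emph{edges}, not distinct neighbours (cf.\ the proof of Theorem~1, where each $\alpha_i,\beta_i$ contributes a separate term). If $i$ has sent $m_j$ messages to $j$, your $h$ produces the same one-hot-supported vector $m_j$ times, and with $g$ the sum you get $m_j$ times the desired value in coordinate $j$. The paper sidesteps this entirely: it sets $h(\mathbf{s}_i,\mathbf{s}_j,\dots)=\mathbf{A}\,\mathbf{s}_i(t)$, a function of $\mathbf{s}_i$ alone, and takes $g$ to be the mean, so every term in the multiset is identical and the multiplicities wash out. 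This also makes your coordinate-$0$ self-index and the associated $\text{msg}_d$ bookkeeping unnecessary --- since all the information already sits in $\mathbf{s}_i$, there is no need to recover $j$'s identity from $\mathbf{s}_j$ at readout time. Adopting the paper's constant-$h$/mean-$g$ readout (or otherwise correcting for edge multiplicities) closes the gap; the rest of your argument goes through.
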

\begin{proof}
We first prove the theorem for the case of moving averages of order $k$, and extend that to apply to persistent forecasting and autoregressive models. Let the maximum number of nodes encountered in the temporal graph be $n$, and assign each node an identifier such that each node is uniquely identified by an $i \in [0, \dots, n - 1]$. We initialise all memory vectors $\mathbf{s}_i$ to be $\mathbf{0} \in \mathbb{R}^{nk}$. 

Next, denote $e_{ij}^{(l)}$ be the feature of the $l$-th message that $i$ sends to $j$, and let $M(i, j, t)$ return the index of the most recent message that $i$ sent to $j$ at time $t$. We assume that the batch size is 1, which means that as soon as a message is sent, we update the memory vectors. Further, for the time being, we ignore the formulation of $\text{msg}_d$, and assume that all the messages received by the aggregator are messages constructed by $\text{msg}_s$. \\

Our goal is to find a formulation of TGNv2 such that, for each timestamp $t$, it computes $\mathbf{z}_i(t)[j] = \frac{1}{k}\sum_{x = 0}^{k - 1} e_{ij}^{(M(i, j, t) - x)}$. We assume that the moving average is defined for all values of $t$ by letting $e_{ij}^{(l)} = 0$ for all negative $l$. We now concretely define the formulation of TGNv2, and subsequently show that it computes the moving average. Now, suppose some node $i$ sends $j$ a message at time $t$. Let $\text{msg}_s$ be formulated as: 
\[
    \text{msg}_s(\mathbf{s}_i(t^-), \mathbf{s}_j(t^-), \phi_t(\Delta t), e_{ij}(t), \phi_n(i), \phi_n(j))= [e_{ij}(t), j]
\]
i.e. $\text{msg}_s$ simply outputs a 2-element vector with the feature of the event message and the index of the destination node. Since our batch size is 1, the aggregator  only receives at most one message. We let the aggregator be the identity function. The main idea of the proof is in the formulation of the memory module, which takes advantage of the node index of the destination to `store' the newest feature message between $i$ and $j$. 

We introduce some machinery to aid our formulation. Consider a block matrix $\mathbf{B} \in \mathbb{R}^{nk \times nk}$ with $n$ matrices $\mathbf{B}_1, \dots, \mathbf{B}_n \in \mathbb{R}^{k \times k}$ in its diagonal: 
\[
    \mathbf{B} = \begin{bmatrix} 
    \mathbf{B}_1 & \dots  & \mathbf{0} \\
    \vdots & \ddots & \vdots\\
    \mathbf{0} & \dots  & \mathbf{B}_n 
    \end{bmatrix}
\]
Define the block-permutation matrix $\mathbf{P} \in \mathbb{R}^{nk \times nk}$:
\[
    \mathbf{P} = \begin{bmatrix} 
    \mathbf{0} & \mathbf{0} &\dots  & \mathbf{I} \\
    \mathbf{I} & \mathbf{0} & \dots  & \mathbf{0} \\
    \mathbf{0} & \mathbf{I} & \dots  & \mathbf{0} \\
    \vdots & \vdots & \ddots & \vdots\\
    \mathbf{0} & \dots  & \mathbf{I} & \mathbf{0}
    \end{bmatrix}
\]
where $\mathbf{I}$ is the $\mathbf{R}^{k \times k}$ identity matrix. Observe that $\mathbf{P}\mathbf{B}\mathbf{P}^T$ cyclically shifts the order of $\mathbf{B}_1, \dots, \mathbf{B}_n$ in $\mathbf{B}$ by one: 
\begin{align*}
    \mathbf{P}\mathbf{B}\mathbf{P}^T = \begin{bmatrix} 
    \mathbf{B}_n & \mathbf{0} & \dots  & \mathbf{0} \\
    \mathbf{0} & \mathbf{B}_1 & \dots & \mathbf{0} \\
    \vdots & \vdots & \ddots & \vdots\\
    \mathbf{0} & \dots  & \mathbf{0} &  \mathbf{B}_{n - 1} 
    \end{bmatrix}
\end{align*}
Similarly, define the permutation matrix $\mathbf{Q}$ that analogously shifts the elements of a vector cyclically by 1: 
\[
    \mathbf{Q} = \begin{bmatrix} 
    0 & 0 &\dots  & 1 \\
    1 & 0 & \dots  & 0 \\
    0 & 1 & \dots  & 0 \\
    \vdots & \vdots & \ddots & \vdots\\
    0 & \dots  & 1 & 0
    \end{bmatrix}
\]

In order to compute the moving average, then each time we receive a message, we need to `make room' in our memory vector to store the message. We introduce the shift matrix $\mathbf{S}$, which is a  $k \times k$ matrix that is obtained by taking the $(k - 1) \times (k - 1)$ identity matrix and sufficiently padding the top row and rightmost column with zeroes which, when applied to a vector $\mathbf{v} \in \mathbb{R}^k$, keeps the top $k-1$ elements and discards the last element:
\[
    \mathbf{S} = \begin{bmatrix} 
    0 & 0 & \dots  & 0 & 0\\
    1 & 0 & \dots & 0 & 0  \\
    \vdots & \ddots & \ddots &\vdots & \vdots\\
    0 & 0 & \dots  & 1 & 0 
    \end{bmatrix}
\]
Define the generator block matrix $\mathbf{X} \in \mathbb{R} ^ {nk \times nk}$ that has the same form as $\mathbf{B}$ and consists of $n$ matrices $\mathbf{B}_1, \dots, \mathbf{B}_n$, but with $\mathbf{B}_1 = \mathbf{S}$ and all other $\mathbf{B}_i = \mathbf{I}$ : 
\[
    \mathbf{X} = \begin{bmatrix} 
    \mathbf{S} & \mathbf{0} & \dots  & \mathbf{0} \\
    \mathbf{0} &  \mathbf{I} &  \dots  & \mathbf{0} \\
    \vdots & \vdots & \ddots & \vdots\\
    \mathbf{0} & \dots  & \mathbf{0} & \mathbf{I} 
    \end{bmatrix}
\]
Similarly, define the generator vector $\mathbf{y} \in \mathbb{R}^{nk} = [1, 0, \dots, 0]^T$. Next, define $f(j) = (\mathbf{P})^j\mathbf{X}(\mathbf{P}^T)^j$, which cyclically shifts the block matrices in $\mathbf{X}$ a total number of $j$ times, and $p(j) = \mathbf{Q}^j \mathbf{y}$, which cyclically shifts the elements of $\mathbf{y}$ a total number of $j$ times. Now, let the memory module be: 
\begin{align*}
    \mathbf{s}_i(t) &= \text{mem}(\mathbf{\bar{m}}_i(t), \mathbf{s}_i(t^{-})) \\
                    &= \text{mem}([e_{ij}(t), j], \mathbf{s}_i(t^{-})) \\ 
                    &= f(j) \cdot  \mathbf{s}_i(t^{-})) + p(j) \cdot e_{ij}(t) \\
\end{align*}
It is quite easy to see, via a straightforward induction, that at every timestamp $t$, $\mathbf{s}_i(t)$ stores the $k$ most recent messages sent to $j$ in the `subarray' $\mathbf{s}_i(t)[jk: jk  + k - 1]$. Consequently, making $\mathbf{z}_i(t)$ compute the moving average is straightforward. Define the aggregator matrix $\mathbf{A} \in \mathbb{R}^{n \times nk}$ to be: 
\[
    \mathbf{A}[m, n] = \begin{cases}
    \frac{1}{k} &\text{if $mk \le  n \le mk + k - 1$}\\
    0 &\text{otherwise}
    \end{cases}
\]
and let our embedding layer be defined as:
\begin{align*}
    \mathbf{z}_i(t) &= g(\{\{ h(\mathbf{s}_i(t), \mathbf{s}_j(t), e_{ij}, \mathbf{v}_i(t), \mathbf{v}_j(t)) : j \in \mathcal{N}_i^L([0, t]  \}\}) \\ 
                    &= \frac{1}{|\mathcal{N}_i^L([0, t]|} \sum_{j = 0} ^ {|\mathcal{N}_i^L([0, t]|} \mathbf{A} \cdot \mathbf{s}_i(t) \\
                    &= \mathbf{A} \cdot \mathbf{s}_i(t)
\end{align*}
which is the moving average of order $k$, as multiplying $\mathbf{A}$ with $\mathbf{s}_i(t)$ has the effect of summing the $k$ most recent messages for each node and multiplying the sum by $\frac{1}{k}$. From this, we can see that the theorem holds for persistent forecasting as persistent forecasting is moving average with $k = 1$. Subsequently, we can adapt our proof above to hold for autoregressive models of any order $k$ by formulating the aggregator matrix $\mathbf{A}$ to have the autoregressive weights $w_k, \dots, w_{1}$ in entries where 1 is present.

In our constructions above, we assumed that our batch size is 1 and we ignored the fact that nodes are receiving messages from $\text{msg}_d$. To adapt the proof for an arbitrary batch size, we can define the aggregator module to concatenate all incoming messages, and then during the memory update, we `unpack' this concatenation and apply our logic above for each message. Finally, to handle messages from $\text{msg}_d$, we can expand the size of the message vector by 1 to include a `tag' that is nonzero if and only if the message originates from $\text{msg}_d$. Then, the aggregator module can drop messages from $\text{msg}_d$ by inspecting this tag, leaving us with messages from $\text{msg}_s$ -- which we have shown how to handle.

\end{proof}

\section{Experiment Details}\label{appendix:experiment-details}
The code to reproduce our experiments can be found at \url{https://github.com/batjandra/TGNv2-NeurReps}.\\

For both TGN and TGNv2, we utilise the same choices of core modules where applicable and use the same hyperparameters in order to make the results as comparable as possible. We repeat each experiment run three times with three different random seeds, each time picking the best-performing model on the validation set, and reporting the mean and standard deviation NDCG @ 10 on both the validation and test set. For our experiments, our choice of core module largely follows the choices made in TGB's experiments with TGN \citep{tgb-paper} for dynamic node affinity prediction:

\paragraph{Message Function.}
Our message function concatenates its inputs. For example, in the case of TGN: 
\begin{align*}
    \text{msg}_s(\mathbf{s}_i(t^-), \mathbf{s}_j(t^-), \phi(\Delta t), e_{ij}(t)) &= [\mathbf{s}_i(t^-) \circ \mathbf{s}_j(t^-) \circ \phi(\Delta t) \circ e_{ij}(t)]^T \\ 
    \text{msg}_d(\mathbf{s}_j(t^-), \mathbf{s}_i(t^-), \phi(\Delta t), e_{ij}(t)) &= [\mathbf{s}_j(t^-) \circ \mathbf{s}_i(t^-) \circ \phi(\Delta t) \circ e_{ij}(t)]^T 
\end{align*}
where $\circ$ is a concatenation operator. Since there are no node events in the TGB dataset, we chose to ignore formulating the message function for node events. 

\paragraph{Message Aggregator.}
We set the message aggregator to take the last message in a batch: 
\begin{align*}
    \bar{\mathbf{m}}_i(t) &= \text{agg}(\mathbf{m}_i(t_1), ..., \mathbf{m}_i(t_b)) \\
                          &= \mathbf{m}_i(t_b)
\end{align*}

\paragraph{Memory Updater.}
We set the memory updater to be a GRU:
\begin{align*}
    \mathbf{s}_i(t) &= \text{mem}(\bar{\mathbf{m}}_i(t), \mathbf{s}_i(t^-))) \\
                    &= \text{GRU}(\bar{\mathbf{m}}_i(t), \mathbf{s}_i(t^-)))
\end{align*}
where $\mathbf{s}_i \in \mathbb{R}^{d_{memory}}$.

\paragraph{Embedding}
We set the embedding module to be one layer of TransformerConv \citep{transformer-conv} with 2 heads and a dropout value of 0.1. For efficiency, we only use the last $x$ neighbours for temporal message passing. We describe the value of $x$ for each experiment in \Cref{hyperparam:tgb}. We set $\mathbf{z}_i(t)$ to have a dimension of $d_{embedding}$.

\paragraph{Decoder}
The decoder takes in the embedding for each node and outputs the node affinities for all other nodes. For our decoder, we chose an MLP with 2 layers + ReLU. Both layers have dimensionality $d_{decoder}$.

\paragraph{Time / Node Encoder}
We set $\phi_t(t) = \text{cos}(w_t \cdot t)$ and $\phi_n(i) = \text{cos}(w_n \cdot i)$ where $w_t \in \mathbb{R}^{d_{time}}$ and $w_n \in \mathbb{R}^{d_{node}}$. \\

\begin{table}[!hb]
  \caption{Hyperparameters for TGB experiments, for both TGN and TGNv2.}
  \label{hyperparam:tgb}
  \centering
  \resizebox{\textwidth}{!}{
      \begin{tabular}{c | c | c | c | c }
        \toprule
        & \texttt{tgbn-trade}  & \texttt{tgbn-genre} & \texttt{tgbn-reddit} & \texttt{tgbn-token} \\
        \midrule
        Learning Rate & 1e-3 & 1e-4 & 1e-4 & 1e-4 \\
        Batch Size & 200 & 200 & 200 & 200 \\ 
        Epochs & 750 & 50 & 50 & 50 \\
        $d$ & 784 & 784 & 784 & 1024 \\
        No. of temporal neighbours $x$ & 25 & 30 & 30 & 10\\
        \bottomrule
      \end{tabular}
    }
\end{table}

\paragraph{Hyperparameters}
For moving average over the ground-truth labels (Moving Average (L)), we set $k = 7$. For moving average over messages (Moving Average (M)), we set $k = 2048$ for \texttt{tgbn-trade}, \texttt{tgbn-genre}, \texttt{tgbn-reddit}, and $k = 512$ for \texttt{tgbn-token} due to memory issues. We use a constant learning rate schedule for all experiments, except for \texttt{tgbn-trade}, where we decay the learning rate by 0.5 every 250 epochs. We use the Adam optimiser \citep{adam} to train our models.  We set a global hidden dimension $d$, that is used in all places where we need to select a dimension, i.e. $d = d_{memory} = d_{embedding} = d_{decoder} = d_{time} = d_{node}$. The hyperparameters that we chose can be found in \Cref{hyperparam:tgb}.

\end{document}